\newcommand{\eqn}[1]{\begin{align}#1\end{align}}
\newcommand{\eq}[1]{\begin{align*}#1\end{align*}}
\newcommand{\proofof}{Proof of }
\newcommand{\proofend}{}
\def\subsubsect#1{\vspace{1ex plus 0.5ex minus 0.5ex}\noindent{\bf\boldmath{#1.}}}
\newcommand{\argmax}{\operatornamewithlimits{arg\,max}}
\newcommand{\R}[0]{\mathbb R}
\newcommand{\N}[0]{\mathbb N}
\theoremstyle{plain}
\newtheorem{theorem}{Theorem}
\newtheorem{lemma}[theorem]{Lemma}
\theoremstyle{definition}
\newtheorem{definition}[theorem]{Definition}
\theoremstyle{remark}
\newenvironment{keywords}{\centerline{\bf\small
Keywords}\begin{quote}\small}{\par\end{quote}\vskip 1ex}
\newcommand{\M}{\mathbf M}
\newcommand{\m}{\mathbf m}
\newcommand{\Mn}{\mathbf M_{norm}}
\newcommand{\Um}{{U_M}}
\newcommand{\Up}{{U_P}}
\newcommand{\Q}{\mathbb Q}
\newcommand{\B}{\mathcal B}
\newcommand{\leqt}{\stackrel{\times}\leq}
\newcommand{\geqt}{\stackrel{\times}\geq}
\newcommand{\eqt}{\stackrel{\times}=}
\newcommand{\prefix}{\sqsubset}
\newcommand{\prefixeq}{\sqsubseteq}
\begin{document}

\title{
\vskip 2mm\bf\Large\hrule height5pt \vskip 4mm
Universal Prediction of Selected Bits
\vskip 4mm \hrule height2pt}
\author{{\bf Tor Lattimore} and {\bf Marcus Hutter} and {\bf Vaibhav Gavane}\\[3mm]
\normalsize Australian National University \\[-0.5ex]
\normalsize\texttt{tor.lattimore@anu.edu.au} \\
\normalsize Australian National University and ETH Z{\"u}rich \\[-0.5ex]
\normalsize\texttt{marcus.hutter@anu.edu.au} \\
\normalsize VIT University, Vellore \\[-0.5ex]
\normalsize\texttt{vaibhav.gavane@gmail.com}
}
\date{20 July 2011}

\maketitle

\begin{abstract}
Many learning tasks can be viewed as sequence prediction problems. For example, online classification can be converted
to sequence prediction with the sequence being pairs
of input/target data and where the goal is to correctly predict the target data given input data and previous input/target pairs.
Solomonoff induction is known to solve the
general sequence prediction problem, but only if the entire sequence is sampled from a computable distribution. In the case of classification
and discriminative learning though,
only the targets need be structured (given the inputs). We show that the normalised version of Solomonoff induction can still be used in this case,
and more generally that it can detect any recursive sub-pattern (regularity) within an otherwise completely unstructured sequence. It is also shown
that the unnormalised version can fail to predict very simple recursive sub-patterns.
\def\contentsname{\centering\normalsize Contents}
{\parskip=-2.7ex\tableofcontents}
\end{abstract}

\begin{keywords} 
Sequence prediction;
Solomonoff induction;
online classification;
discriminative learning;
algorithmic information theory.
\end{keywords}

\newpage
\section{Introduction}

The sequence prediction problem is the task of predicting the next symbol, $x_n$ after observing $x_1 x_2 \cdots x_{n-1}$.
Solomonoff induction \cite{Sol64a,Sol64b} solves this problem by taking inspiration from Occam's razor and Epicurus' principle of multiple
explanations. These ideas are formalised in the field of Kolmogorov complexity, in particular by the universal a priori semi-measure $\M$.

Let $\mu(x_n|x_1\cdots x_{n-1})$ be the true (unknown) probability of seeing $x_n$ having already observed $x_1\cdots x_{n-1}$.
The celebrated result of Solomonoff \cite{Sol64a} states that if $\mu$ is computable then
\eqn{\label{eqn-sol}
\lim_{n\to\infty} \left[\M(x_n|x_1\cdots x_{n-1}) - \mu(x_n | x_1\cdots x_{n-1})\right] = 0 \text{ with } \mu\text{-probability } 1
}
That is, $\M$
can learn the true underlying distribution from which the data is sampled with probability 1. Solomonoff induction is arguably the gold
standard predictor, universally solving many (passive) prediction problems \cite{Hut04,Hut07, Sol64a}.

However, Solomonoff induction makes no guarantees if $\mu$ is not computable. This would not be problematic if it were unreasonable to
predict sequences sampled from incomputable $\mu$, but this is not the case. Consider the sequence below, where every
even bit is the same as the preceding odd bit, but where the odd bits may be chosen arbitrarily.
\eqn{
\label{eqn-intro-1} \text{00 11 11 11 00 11 00 00 00 11 11 00 00 00 00 00 11 11}
}
Any child will quickly learn the pattern that each even bit is the same as the preceding odd bit and will
correctly predict the even bits. If Solomonoff induction is to be considered a truly intelligent predictor then it too should be
able to predict the even bits. More generally, it should be able to detect any computable sub-pattern.
It is this question, first posed in \cite{Hut04,Hut09open} and resisting attempts by experts for 6 years, that we address.

At first sight, this appears to be an esoteric question, but consider the following problem. Suppose you are given a sequence of pairs,
$x_1 y_1 x_2 y_2 x_3 y_3\cdots$
where $x_i$ is the data for an image (or feature vector) of a character and $y_i$ the corresponding ascii code (class label) for that character.
The goal of online classification is to construct a predictor
that correctly predicts $y_i$ given $x_i$ based on the previously seen training pairs. It is reasonable to assume that there is a relatively
simple pattern to generate $y_i$ given $x_i$ (humans and computers seem to find simple patterns for character recognition). However it is not necessarily reasonable to assume there exists a simple, or even computable,
underlying distribution generating the training data $x_i$. This problem is precisely what gave rise to discriminative learning \cite{LR06}.

It turns out that there exist sequences with even bits equal to preceding odd bits on which the conditional distribution of $\M$
fails to converge to $1$ on the even bits.
On the other hand, it is known that $\M$ is a defective measure, but may be normalised to a proper measure, $\Mn$.
We show that this normalised version {\it does} converge on any recursive sub-pattern
of any sequence, such as that in Equation (\ref{eqn-intro-1}). This outcome is unanticipated since (all?) other results in the field
are independent of normalisation \cite{Hut04,Hut07,LV08, Sol64a}. The proofs are completely different to the standard proofs of
predictive results.

\section{Notation and Definitions}

We use similar notation to \cite{Gacs83,Gacs08,Hut04}. For a more comprehensive introduction to Kolmogorov complexity and
Solomonoff induction see \cite{Hut04,Hut07, LV08, ZL70}.

\subsubsect{Strings}
A finite binary string $x$ is a finite sequence $x_1x_2x_3\cdots x_n$ with $x_i \in \B = \left\{0, 1\right\}$. Its length is denoted $\ell(x)$.
An infinite binary string $\omega$ is an infinite sequence $\omega_1\omega_2\omega_3\cdots$.
The empty string of length zero is denoted $\epsilon$.
$\B^n$ is the set of all binary strings of length $n$. $\B^*$ is the set of all finite binary strings. $\B^\infty$ is the
set of all infinite binary strings.
Substrings are denoted $x_{s:t} := x_s x_{s+1}\cdots x_{t-1}x_{t}$ where $s,t \in \N$ and $s \leq t$. If $s > t$ then $x_{s:t} = \epsilon$.
A useful shorthand is $x_{<t} := x_{1:t-1}$.
Strings may be concatenated.
Let $x,y \in \B^*$ of length $n$ and $m$ respectively. Let $\omega \in \B^\infty$. Then,
\eq{
xy &:= x_1x_2\cdots x_{n-1}x_{n}y_1y_2\cdots y_{m-1}y_m \\
x\omega &:= x_1x_2\cdots x_{n-1}x_{n}\omega_1\omega_2\omega_3\cdots
}
For $b \in \B$, $\neg b = 0$ if $b = 1$ and $\neg b = 1$ if $b = 0$.
We write
$x \prefixeq y$ if $x$ is a prefix of $y$. Formally, $x \prefixeq y$ if $\ell(x) \leq \ell(y)$ and $x_i = y_i$ for all $1 \leq i \leq \ell(x)$.
$x \prefix y$ if $x \prefixeq y$ and $\ell(x) < \ell(y)$.

\subsubsect{Complexity}
Here we give a brief introduction to Kolmogorov complexity and the associated notation.

\begin{definition}[Inequalities]
Let $f, g$ be real valued functions. We write
$f(x) \geqt g(x)$
if there exists a constant $c > 0$ such that $f(x) \geq c \cdot g(x)$ for all $x$. $f(x) \leqt g(x)$ is defined similarly.
$f(x) \eqt g(x)$ if $f(x) \leqt g(x)$ and $f(x) \geqt g(x)$.
\end{definition}

\begin{definition}[Measures]
We call $\mu:\B^* \to [0,1]$ a {\it semimeasure} if $\mu(x) \geq \sum_{b\in\B} \mu(xb)$ for all $x \in \B^*$, and a probability measure if equality
holds and $\mu(\epsilon) = 1$. $\mu(x)$ is the $\mu$-probability that a sequence starts with $x$. $\mu(b | x) := {\mu(xb) \over \mu(x)}$ is the
probability of observing $b \in \B$ given that $x \in \B^*$ has already been observed.
A function $P:\B^* \to [0,1]$ is a {\it semi-distribution} if $\sum_{x\in\B^*} P(x) \leq 1$ and a probability distribution if equality holds.
\end{definition}

\begin{definition}[Enumerable Functions]
A real valued function $f:A \to \R$ is {\it enumerable} if there
exists a computable function $f:A\times\N \to \Q$ satisfying
$\lim_{t\to\infty} f(a, t) = f(a)$ and $f(a, t+1) \geq f(a, t)$ for all $a \in A$ and $t \in \N$.
\end{definition}

\begin{definition}[Machines]
A Turing machine $L$ is a recursively enumerable set (which may be finite) containing pairs of finite binary strings $(p^1, y^1), (p^2, y^2), (p^3, y^3),\cdots$.

$L$ is a {\it prefix machine} if the set $\left\{p^1, p^2,p^3\cdots\right\}$ is prefix free (no program is a prefix of any other).
It is a {\it monotone machine} if for all $(p, y), (q, x) \in L$ with $\ell(x) \geq \ell(y)$, $p \prefixeq q \implies y \prefixeq x$.

We define $L(p)$ to be the set of strings output by program $p$. This is different for monotone and prefix machines.
For prefix machines, $L(p)$ contains only one element,  $y \in L(p)$ if $(p, y) \in L$. For monotone machines, $y \in L(p)$ if there
exists $(p, x) \in L$ with $y \prefixeq x$ and there does not exist a $(q, z) \in L$ with $q \prefix p$ and $y \prefixeq z$.
For both machines $L(p)$ represents the output of machine $L$ when given input $p$.
If $L(p)$ does not exist then we say $L$ does not halt on input $p$.
Note that for monotone machines it is possible for the same program to output multiple strings. For example $(1, 1), (1, 11), (1, 111), (1, 1111), \cdots$
is a perfectly legitimate monotone Turing machine. For prefix machines this is not possible. Also note that if $L$ is a monotone machine
and there exists an $x \in \B^*$ such that $x_{1:n} \in L(p)$ and $x_{1:m} \in L(p)$ then $x_{1:r} \in L(p)$ for all $n \leq r \leq m$.
\end{definition}

\begin{definition}[Complexity] Let $L$ be a prefix or monotone machine then define
\eq{
\lambda_L(y) &:= \sum_{p : y \in L(p)} 2^{-\ell(p)} & C_L(y) &:= \min_{p \in \B^*} \left\{ \ell(p) : y \in L(p) \right\}
}
If $L$ is a prefix machine then we write $\m_L(y) \equiv \lambda_L(y)$. If $L$ is a monotone machine then we write $\M_L(y) \equiv \lambda_L(y)$.
Note that if $L$ is a prefix machine then $\lambda_L$ is an enumerable semi-distribution while if $L$ is a monotone machine, $\lambda_L$ is
an enumerable semi-measure. In fact, every enumerable semi-measure (or semi-distribution) can be represented via some machine $L$ as $\lambda_L$.
\end{definition}
For prefix/monotone machine $L$ we write $L_t$ for the first $t$ program/output pairs in the recursive
enumeration of $L$, so $L_t$ will be a finite set containing at most $t$ pairs.\footnote{$L_t$ will contain exactly $t$ pairs unless $L$ is finite,
in which case it will contain $t$ pairs until $t$ is greater than the size of $L$. This annoyance will never be problematic.}

The set of all monotone (or prefix) machines is itself recursively enumerable \cite{LV08},\footnote{Note the
enumeration may include repetition, but this is unimportant in this case.} which allows one to define a
universal monotone machine $\Um$
as follows. Let $L^i$ be the $i$th monotone machine in the recursive enumeration of monotone machines.
\eq{
(i'p, y) \in \Um \Leftrightarrow (p, y) \in L^i
}
where $i'$ is a prefix coding of the integer $i$. A universal prefix machine, denoted $\Up$, is defined in a similar way. For details see \cite{LV08}.

\begin{theorem}[Universal Prefix/Monotone Machines]\label{thm_universal}
For the universal monotone machine $\Um$ and universal prefix machine $\Up$,
\eq{
\m_\Up(y) &> c_L \m_L(y) \text{ for all } y\in \B^* & \M_\Um(y) &> c_L \M_L(y) \text{ for all } y\in\B^*
}
where $c_L > 0$ depends on $L$ but not $y$.
\end{theorem}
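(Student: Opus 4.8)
The plan is to use the standard coding (simulation) argument. Every monotone machine $L$ occurs as $L^i$ for some index $i$ in the recursive enumeration, and by construction $\Um$ simulates $L^i$ on input $i'p$ whenever $L^i$ runs on $p$, paying only the fixed overhead $\ell(i')$ for the self-delimiting index $i'$. The natural candidate is therefore $c_L := 2^{-\ell(i')}$, which depends on $L$ through its index $i$ but not on $y$. I would prove the monotone statement in detail and remark that the prefix statement is the same argument with the monotone output analysis stripped away.

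First I would fix a monotone machine $L$, let $i$ be its index so that $L = L^i$, and reduce everything to the single implication
\[
y \in L^i(p) \implies y \in \Um(i'p).
\]
Unfolding the monotone output semantics, if $y \in L^i(p)$ then some $(p,x)\in L^i$ has $y \prefixeq x$, and by the definition of $\Um$ we get $(i'p,x)\in\Um$ with $y \prefixeq x$; this supplies the first requirement for $y \in \Um(i'p)$. The remaining requirement is that no strictly shorter $\Um$-program $q \prefix i'p$ already outputs a string extending $y$.

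The key step, and the one I expect to be the main obstacle, is characterising exactly which $\Um$-programs are prefixes of $i'p$; this is where prefix-freeness of the index coding is essential. I would argue that any $q \prefix i'p$ occurring as the first coordinate of a pair in $\Um$ must factor as $q = i'r$ with $r \prefix p$: a proper prefix of $i'$ cannot contain a complete self-delimiting index, since prefix-freeness forbids any codeword $j'$ from being a proper prefix of the codeword $i'$, so such a $q$ cannot be parsed by $\Um$ at all unless $i' \prefixeq q$, and then $q \prefix i'p$ forces $q = i'r$ with $r \prefix p$. Hence the pairs $(q,z)\in\Um$ with $q \prefix i'p$ are precisely the $(i'r,z)$ with $(r,z)\in L^i$ and $r \prefix p$, and "$y \prefixeq z$ for some such $(r,z)$" is exactly the condition that $y \in L^i(p)$ already rules out. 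So no override occurs, not even from a different machine, and $y \in \Um(i'p)$.

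Given this implication, $p \mapsto i'p$ is an injection from $\{p : y \in L^i(p)\}$ into $\{q : y \in \Um(q)\}$ with $\ell(i'p) = \ell(i')+\ell(p)$, so summing $2^{-\ell(\cdot)}$ gives
\[
\M_\Um(y) = \sum_{q:\, y \in \Um(q)} 2^{-\ell(q)} \;\geq\; 2^{-\ell(i')}\!\!\sum_{p:\, y \in L^i(p)} 2^{-\ell(p)} \;=\; 2^{-\ell(i')}\,\M_L(y).
\]
This is the claimed bound with $c_L = 2^{-\ell(i')}$; to upgrade $\geq$ to the strict $>$, I would halve the constant and note that $\M_\Um(y) > 0$ for every $y$ (some program of $\Um$ outputs an extension of $y$), which also covers the case $\M_L(y)=0$. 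The prefix-machine inequality follows from the very same map $p\mapsto i'p$, using only the defining relation $(i'p,y)\in\Up \Leftrightarrow (p,y)\in L^i$ together with the fact that the construction keeps the $\Up$-programs prefix-free, so that $\m_\Up(y) \geq 2^{-\ell(i')}\m_L(y)$; here no monotone output analysis is needed.
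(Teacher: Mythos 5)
Your proposal is correct: the paper gives no proof of this theorem (it defers to [LV08]), and your argument is exactly the standard simulation proof found there --- the map $p \mapsto i'p$, the prefix-freeness argument showing that any $(q,z) \in \Um$ with $q \prefix i'p$ must factor as $q = i'r$ with $(r,z) \in L^i$ and $r \prefix p$ (so that the no-override condition in the monotone output semantics is inherited from $y \in L^i(p)$), and the summation giving $c_L = 2^{-\ell(i')}$. Your handling of the two points most often glossed over --- the override analysis for monotone machines and the upgrade from $\geq$ to strict $>$ via halving the constant and $\M_{\Um}(y) > 0$ --- is sound, so there is nothing to flag beyond agreement.
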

For a proof, see \cite{LV08}. As usual, we will fix reference universal prefix/monotone machines $\Up$, $\Um$ and drop the subscripts by letting,
\eq{
\m(y) &:= \m_\Up(y) \equiv \sum_{p : y \in \Up(p)} 2^{-\ell(p)} & \M(y) &:= \M_\Um(y) \equiv \sum_{p : y\in \Um(p)} 2^{-\ell(p)} \\
K(y) &:= C_\Up(y) \equiv \min_{p \in \B^*} \left\{\ell(p) : y\in\Up(p)\right\} & Km(y) &:= \min_{p\in\B^*} \left\{\ell(p) : y\in\Um(p)\right\}
}
The choice of reference universal Turing machine is usually\footnote{See \cite{HM07} for a subtle exception. All the results in this paper are independent of universal Turing machine.} unimportant since a different choice varies $\m, \M$ by only a multiplicative
constant, while $K, Km$ are varied by additive constants.
For natural numbers $n$ we define $K(n)$ by $K(\left<n\right>)$ where $\left<n\right>$ is the binary representation of $n$.

$\M$ is not a proper measure, $\M(x) > \M(x0) + \M(x1)$ for all $x \in \B^*$, which means that $\M(0|x) + \M(1|x) < 1$, so $\M$
assigns a non-zero probability that the sequence will end. This is because there are monotone programs $p$ that halt, or enter infinite loops.
For this reason Solomonoff introduced a normalised version, $\Mn$ defined as follows.
\begin{definition}[Normalisation]
\eq{
\Mn(\epsilon) &:=1 & \Mn(y_n | y_{<n}) \equiv {\Mn(y_{1:n}) \over \Mn(y_{<n})}:= {\M(y_{1:n}) \over \M(y_{<n}0) + \M(y_{<n}1)}.
}
\end{definition}
This normalisation is not unique, but is philosophically and technically the most attractive and was used and defended by Solomonoff.
Historically, most researchers have accepted the defective $\M$ for technical convenience. As mentioned, the difference seldom matters,
but in this paper it is somewhat surprisingly crucial. For a discussion of normalisation, see \cite{LV08}.

\begin{theorem}\label{thm_kolmogorov}The following are results in Kolmogorov complexity. Proofs for all can be found in \cite{LV08}.
\begin{enumerate}
\item $\m(x) \eqt 2^{-K(x)}$
\item $2^{-K(xb)} \eqt 2^{-K(x\neg b)}$
\item $\M(x) \geqt \m(x)$
\item If $P$ is an enumerable semi-distribution, then $\m(y) \geqt P(y)$
\item If $\mu$ is an enumerable semi-measure, then $\M(y) \geqt \mu(y)$
\end{enumerate}
\end{theorem}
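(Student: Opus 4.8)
The plan is to order the five claims by difficulty, reduce items 3--5 to the universality statement of Theorem~\ref{thm_universal}, and isolate the coding theorem (item 1) as the only substantial step.

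Items 4 and 5 are direct consequences of universality. The Complexity definition already records that every enumerable semi-distribution (resp.\ semimeasure) equals $\lambda_L$ for some prefix (resp.\ monotone) machine $L$. So for item 4, given an enumerable semi-distribution $P$, I would fix a prefix machine $L$ with $\m_L = P$ and apply Theorem~\ref{thm_universal} to get $\m(y) > c_L\,\m_L(y) = c_L\,P(y)$, i.e.\ $\m(y) \geqt P(y)$. Item 5 is the verbatim analogue with $\m,\Up$ replaced by $\M,\Um$, a monotone machine $L$, and an enumerable semimeasure $\mu$.

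For item 3 I would manufacture an enumerable semimeasure out of $\m$ and feed it to item 5. Set $\nu(x) := \sum_{y\,:\,x\,\prefixeq\, y}\m(y)$. Because $\m$ is enumerable from below with $\sum_x \m(x) \leq 1$, the truncated sums show $\nu$ is enumerable from below and $\nu(\epsilon)\leq 1$; and $\nu(x0)+\nu(x1) = \nu(x)-\m(x) \leq \nu(x)$, so $\nu$ is a semimeasure. Since $\nu(x) \geq \m(x)$ trivially, item 5 gives $\M(x) \geqt \nu(x) \geq \m(x)$, hence $\M(x) \geqt \m(x)$.

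Item 1, the coding theorem, is where I expect the real work, and the $\leqt$ direction is the main obstacle. The easy direction $\m(x) \geqt 2^{-K(x)}$ is immediate, since the length-$K(x)$ program witnessing $K(x)$ already contributes $2^{-K(x)}$ to the sum defining $\m(x)$. For the converse $K(x) \leq -\log_2\m(x) + O(1)$ I would invoke the Kraft--Chaitin construction: as $\m$ is enumerable from below, I run its approximation and, each time the lower bound on $\m(x)$ passes a fresh power-of-two threshold, issue a request reserving a codeword of length $\lceil-\log_2(\text{current bound})\rceil + O(1)$ for $x$. The requested weights telescope to at most $2\sum_x \m(x) \leq 2$, so after a one-bit correction the Kraft--Chaitin theorem delivers a prefix machine that encodes $x$ in $-\log_2\m(x)+O(1)$ bits, and Theorem~\ref{thm_universal} transfers this to $\Up$ at the cost of one more additive constant. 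Verifying that the request weights genuinely sum to a constant and that the enumeration is effective is the delicate bookkeeping that makes this the hardest part.

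Finally, item 2 falls out of item 1. By item 1 it suffices to prove $\m(xb) \eqt \m(x\neg b)$. I would take the prefix machine $L$ obtained from $\Up$ by complementing the final output bit, so that $(p,y') \in L$ exactly when $(p,y)\in\Up$ with $\ell(y)\geq 1$ and $y'$ is $y$ with its last bit flipped. The program set is untouched, hence still prefix free, and $x\neg b \in L(p) \Leftrightarrow xb \in \Up(p)$, giving $\m_L(x\neg b) = \m(xb)$. Theorem~\ref{thm_universal} then yields $\m(x\neg b) \geqt \m_L(x\neg b) = \m(xb)$, and the symmetric machine gives the reverse inequality, so $\m(xb)\eqt\m(x\neg b)$ and therefore $2^{-K(xb)} \eqt 2^{-K(x\neg b)}$.
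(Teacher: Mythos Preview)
The paper does not actually prove Theorem~\ref{thm_kolmogorov}; it simply cites \cite{LV08} for all five items, adding only the remark (immediately after the theorem) that items 4 and 5 are equivalent to Theorem~\ref{thm_universal} via the representation of enumerable semi-(measures/distributions) as $\lambda_L$---which is exactly your argument for those items. Your sketches for all five parts are correct and are the standard proofs one finds in \cite{LV08}: the Kraft--Chaitin construction for the hard direction of item~1, the cumulative measure $\nu(x)=\sum_{y\,:\,x\,\prefixeq\,y}\m(y)$ for item~3, and the bit-flipping machine for item~2 are all textbook.
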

Note the last two results are equivalent to Theorem \ref{thm_universal} since every enumerable semi-(measure/distribution) is
generated by a monotone/prefix machine in the sense of Theorem \ref{thm_universal} and vice-versa.

Before proceeding to our own theorems we need a recently proven result in algorithmic information theory.
\begin{theorem}\label{thm_gap}[Lempp, Miller, Ng and Turetsky, 2010, unpublished, private communication]
$\lim_{n\to\infty} {\m(\omega_{<n}) \over \M(\omega_{<n})} = 0$, for all $\omega \in \B^\infty$.
\end{theorem}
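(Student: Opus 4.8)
The plan is to work with the complexity reformulation. Since $\m(x) \eqt 2^{-K(x)}$ by Theorem~\ref{thm_kolmogorov}, the claim is equivalent to $\M(\omega_{<n})\,2^{K(\omega_{<n})} \to \infty$; writing $KM(x) := -\log \M(x)$, it says $K(\omega_{<n}) - KM(\omega_{<n}) \to \infty$. The one structural advantage I would lean on throughout is that $\m$ is an enumerable \emph{semi-distribution}: the prefixes $\omega_{<1},\omega_{<2},\dots$ are distinct strings, so $\sum_{n}\m(\omega_{<n}) \le \sum_{x\in\B^*}\m(x) \le 1$. Thus the series $\sum_n \m(\omega_{<n})$ converges for \emph{every} $\omega$, whereas the matching series for $\M$ need not.

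First I would dispose of the easy regime. Because $\M$ is a semi-measure, $\M(\omega_{<n})$ is non-increasing in $n$ and converges to some $a_\infty \ge 0$. If $a_\infty > 0$ (this covers every computable $\omega$, for which one monotone program contributes constant mass to all $\M(\omega_{<n})$), then convergence of $\sum_n\m(\omega_{<n})$ forces $\m(\omega_{<n}) \to 0$, so the ratio tends to $0$ at once. The whole content therefore lies in the case $a_\infty = 0$, where numerator and denominator both vanish and I must compare rates.

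For that case the natural first attempt is to manufacture a competitor to $\M$ out of $\m$ and invoke universality. Set $\nu(x) := \sum_{x \prefixeq y}\m(y)$. Then $\nu(x) = \m(x) + \nu(x0) + \nu(x1) \ge \nu(x0)+\nu(x1)$, so $\nu$ is an enumerable semi-measure with $\nu(\epsilon)\le 1$, and Theorem~\ref{thm_kolmogorov} gives $\M(x) \geqt \nu(x)$. Evaluated on the path, every $\omega_{<n}$ with $n\ge m$ extends $\omega_{<m}$, whence $\M(\omega_{<m}) \geqt \sum_{n\ge m}\m(\omega_{<n})$. More generally, for any weight $w$ the object $\nu_w(x):=\sum_{x\prefixeq y}w(\ell(y))\m(y)$ is again a semi-measure, total mass $\nu_w(\epsilon)\le\sum_n w(n)$, and keeping only the $n=m$ term yields $\M(\omega_{<m}) \geqt w(m)\,\m(\omega_{<m})$, i.e.\ $\m(\omega_{<m})/\M(\omega_{<m}) \leqt 1/w(m)$.

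Here is where I expect the main obstacle. The weight $w$ must be summable for $\nu_w$ to be a finite semi-measure, so $w(m)\not\to\infty$ and this soft route can only bound the ratio by a \emph{constant}: the summability budget of $\m$ is exactly balanced against the mass budget of a semi-measure, and neither universality nor the branching bound $\sum_n\M(\omega_{<n}\neg\omega_n)\le 1$ closes the gap. Consequently a genuinely adaptive construction seems unavoidable, and I would argue by contradiction. Suppose there are $\epsilon>0$ and infinitely many $n$ with $\m(\omega_{<n}) \ge \epsilon\,\M(\omega_{<n})$. I would build, in stages, a monotone machine (equivalently an enumerable semi-measure) that watches the stage-$t$ lower approximations $\m_t,\M_t$, detects approximate coincidences $\m_t(x) \ge \epsilon\,\M_t(x)$, and pours a controlled quantum of extra mass onto those prefixes $x$, enough to force $\M$ above $\epsilon^{-1}\m$ there and so contradict the hypothesis through universality. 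The delicate part — and, I believe, the real heart of the theorem — is the mass accounting: the injected mass must be charged against $\m$ at the coincidence sites (whose total $\m$-mass along the path is at most $1$) so the construction stays a bounded semi-measure, and all of this must be done uniformly, without knowing $\omega$ or the bad set in advance. I expect this amortised bookkeeping, rather than any single inequality, to be the crux, and it is presumably what demanded the specialised techniques of Lempp, Miller, Ng and Turetsky.
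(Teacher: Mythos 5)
First, a point about the comparison you were asked to survive: the paper does \emph{not} prove Theorem~\ref{thm_gap} at all --- it is imported as an unpublished result of Lempp, Miller, Ng and Turetsky (private communication) and used as a black box in the proof of Theorem~\ref{thm_positive}. So your proposal has to stand entirely on its own, and it does not: it is a correct analysis of the problem plus a strategy sketch, not a proof. The parts you actually establish are fine --- the summability $\sum_n \m(\omega_{<n}) \le 1$ along a chain of prefixes, the easy case $\inf_n \M(\omega_{<n}) > 0$, the tail-sum semi-measure $\nu(x) = \sum_{x \prefixeq y} \m(y)$ giving $\M(\omega_{<m}) \geqt \sum_{n \ge m} \m(\omega_{<n})$, and the diagnosis that weighted variants of this idea cannot beat a constant. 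But the whole content of the theorem is the remaining case $\M(\omega_{<n}) \to 0$, and there you only gesture at a construction and yourself concede that the crux is missing.

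Second, the sketched construction has a concrete flaw, not merely unfinished bookkeeping. Say you inject mass $A\cdot\m_t(x)$ (with $A$ on the order of $\epsilon^{-1}$ or larger) at every string $x$ where a coincidence $\m_t(x) \ge \epsilon\,\M_t(x)$ is detected. Since the construction cannot know $\omega$, it must do this at all detected sites in the tree, at total cost up to $A\sum_{x\in\B^*}\m(x) \le A$; to obtain an enumerable semi-measure you must therefore scale the whole thing down by $A$, and universality then yields only $\M(x) \geqt \frac{1}{A}\cdot A\,\m(x) = \m(x)$ at the detected sites --- which is part 3 of Theorem~\ref{thm_kolmogorov}, already known, and yields no contradiction with $\m(x) \ge \epsilon\,\M(x)$ because the $\geqt$ hides a constant that can be far smaller than $\epsilon$. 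The amplification is exactly cancelled by the normalisation: this is the very ``budget exactly balanced'' phenomenon you identified for the weighted tail sums, and the adaptive version inherits it verbatim. (There is also the self-reference problem you half-acknowledge: the injected mass feeds back into $\M$ and changes which sites count as coincidences.) A genuine proof needs a mechanism with a net gain --- plausibly exploiting that mass a monotone machine places on a deep prefix of a chain is automatically inherited by every shorter prefix, whereas $\m$ must pay separately at each prefix --- and nothing in your sketch supplies such a mechanism. Identifying the obstacle is valuable, but as a proof of Theorem~\ref{thm_gap} the proposal is incomplete at exactly the point where the theorem lives.
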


\section{$\Mn$ Predicts Selected Bits}

The following Theorem is the main positive result of this paper. It shows that any computable sub-pattern of a sequence will
eventually be predicted by $\Mn$.
\begin{theorem}\label{thm_positive}
Let $f:\B^*\to \B \cup \left\{\epsilon\right\}$ be a total recursive function and $\omega \in \B^\infty$ satisfying $f(\omega_{<n}) = \omega_n$
whenever $f(\omega_{<n}) \neq \epsilon$.
If $f(\omega_{<n_i}) \neq \epsilon$ is defined for an infinite sequence $n_1, n_2, n_3, \cdots$ then
$\lim_{i\to\infty} \Mn(\omega_{n_i} | \omega_{<n_i}) = 1$.
\end{theorem}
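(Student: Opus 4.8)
The plan is to work with an equivalent reformulation of the claim. Since $\Mn(\cdot\,|\,\omega_{<n})$ is a proper measure, $\Mn(\omega_n\,|\,\omega_{<n}) = \M(\omega_{1:n})/\big(\M(\omega_{<n}0)+\M(\omega_{<n}1)\big)$, and since $\M(\omega_{<n}0)+\M(\omega_{<n}1) = \M(\omega_{1:n}) + \M(\omega_{<n}\neg\omega_n)$, writing $r_i := \M(\omega_{<n_i}\neg\omega_{n_i})/\M(\omega_{1:n_i})$ we get $\Mn(\omega_{n_i}\,|\,\omega_{<n_i}) = 1/(1+r_i)$. So it suffices to prove $r_i \to 0$. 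Note this target involves only the unnormalised $\M$; normalisation enters solely through having replaced $\M(\omega_{<n})$ in the denominator by $\M(\omega_{<n}0)+\M(\omega_{<n}1)$, discarding the ``halting'' mass that is what makes the unnormalised predictor fail.

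First I would bound the numerator from above. Call $x$ \emph{consistent} if $x_m = f(x_{<m})$ at every $m \le \ell(x)$ with $f(x_{<m}) \neq \epsilon$; every prefix of $\omega$ is consistent by hypothesis. The key observation is that for consistent $x$ with $f(x)\neq\epsilon$ the ``wrong continuations'' $x\neg f(x)$ form an antichain: if $x\neg f(x) \prefixeq x'\neg f(x')$ with $x\neq x'$ and both consistent, then (comparing lengths) $x\neg f(x)\prefixeq x'$, forcing $x'_{\ell(x)+1} = \neg f(x)$ and contradicting the consistency of $x'$ at position $\ell(x)+1$. Hence $P(x) := \M(x\neg f(x))\cdot\ind{x \text{ consistent and } f(x)\neq\epsilon}$ satisfies $\sum_x P(x) \le 1$, because the total mass a semimeasure places on an antichain is at most $\M(\epsilon)\le 1$ and $x\mapsto x\neg f(x)$ is injective; moreover $P$ is enumerable since $\M$ is enumerable and $f$ is total recursive. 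By the universality of $\m$ among enumerable semi-distributions (Theorem \ref{thm_kolmogorov}), $\m(x)\geqt P(x)$, so in particular $\M(\omega_{<n_i}\neg\omega_{n_i}) \leqt \m(\omega_{<n_i})$.

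Next I would bound the denominator from below using the gap Theorem \ref{thm_gap}. Because at a selected position the bit $\omega_{n_i} = f(\omega_{<n_i})$ is computed deterministically from $\omega_{<n_i}$, appending it costs only $O(1)$ bits, giving $K(\omega_{1:n_i}) \le K(\omega_{<n_i}) + O(1)$ and therefore $\m(\omega_{<n_i}) \leqt \m(\omega_{1:n_i}) = \m(\omega_{<n_i+1})$ via Theorem \ref{thm_kolmogorov}. Combining the two bounds with $\M(\omega_{1:n_i}) = \M(\omega_{<n_i+1})$ yields
\[
r_i \;\leqt\; \frac{\m(\omega_{<n_i})}{\M(\omega_{<n_i+1})} \;\leqt\; \frac{\m(\omega_{<n_i+1})}{\M(\omega_{<n_i+1})},
\]
and the right-hand side tends to $0$ by Theorem \ref{thm_gap} applied along the subsequence $n_i+1$. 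Hence $r_i\to 0$, and the theorem follows.

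The step I expect to be the crux is the upper bound on the numerator: recognising that the wrong continuations of consistent strings are prefix-free is what turns $x\mapsto \M(x\neg f(x))$ into a genuine enumerable semi-distribution, thereby letting the universality of $\m$ convert the hard-to-control monotone quantity $\M(\omega_{<n_i}\neg\omega_{n_i})$ into the tame prefix quantity $\m(\omega_{<n_i})$. The only other essential ingredient is the gap theorem, which supplies exactly the separation $\m \ll \M$ needed to beat this bound; without it the two sides would be merely comparable, which is precisely the situation in which the unnormalised predictor can fail.
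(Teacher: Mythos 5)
Your proof is correct, and while it follows the same overall skeleton as the paper's proof, it implements the crucial step by a genuinely different and arguably cleaner argument. Both proofs reduce the theorem to showing $\M(\omega_{<n_i}\neg\omega_{n_i})/\M(\omega_{1:n_i}) \to 0$, dominate the numerator by a prefix quantity $\m(\cdot)$, and finish with the gap theorem (Theorem \ref{thm_gap}) plus an $O(1)$ coding step to align the arguments of $\m$ and $\M$. The difference is in how the monotone-to-prefix conversion is obtained. The paper works on the \emph{program} side: it builds a monotone machine $L$ by truncating every $\Um$-output at $f$'s first prediction error, restricts each finite stage $L_t$ to its prefix-free subset $\bar L_t$ of shortest programs, invokes Kraft's inequality to get an enumerable semi-distribution $P$ satisfying the exact equality $P(\omega_{<n_i}\neg\omega_{n_i}) = \M(\omega_{<n_i}\neg\omega_{n_i})$, and then applies universality of $\m$. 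You work on the \emph{string} side: the wrong continuations $x\neg f(x)$ of consistent strings form an antichain, so $P(x) := \M(x\neg f(x))$ is an enumerable semi-distribution, and universality of $\m$ gives $\M(\omega_{<n_i}\neg\omega_{n_i}) \leqt \m(\omega_{<n_i})$ directly. This avoids all the recursion-theoretic bookkeeping (verifying $L$ is a machine, handling duplicates, the three-case analysis showing $P(\cdot,t)$ is monotone in $t$), at the price of invoking the fact that a semimeasure places total mass at most $\M(\epsilon)\le 1$ on any antichain --- a standard fact, but one the paper never states, so you should prove it (extend each antichain element to all its length-$n$ continuations, use disjointness and $\sum_{z\in\B^n}\M(z)\le \M(\epsilon)$, and let $n\to\infty$). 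Two further minor contrasts: you use $K(\omega_{1:n_i}) \le K(\omega_{<n_i}) + O(1)$ (appending the computed bit $f(\omega_{<n_i})$), where the paper uses its Theorem \ref{thm_kolmogorov}.2, $2^{-K(xb)} \eqt 2^{-K(x\neg b)}$; both are valid with constants independent of $i$. And the paper's program-level machinery is what it then perturbs (via dovetailing) to obtain Theorem \ref{thm_positive2} for partial recursive $f$; your $P$ would admit an analogous fix --- only enumerate $\M(x\neg f(x),t)$ once $f$ has been verified to halt consistently on all prefixes of $x$ --- so nothing is lost in that direction either.
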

Essentially the Theorem is saying that if there exists a computable predictor $f$ that correctly predicts the next bit every time it tries
(i.e when $f(\omega_{<n}) \neq \epsilon$) then $\Mn$ will eventually predict the same bits as $f$.
By this we mean that if you constructed a predictor $f_{\Mn}$ defined by $f_{\Mn}(\omega_{<n}) = \argmax_{b\in\B} \Mn(b|\omega_{<n})$, then
there exists an $N$ such that $f_{\Mn}(\omega_{<n}) = f(\omega_{<n})$ for all $n > N$ where $f(\omega_{<n}) \neq \epsilon$.
For example, let $f$ be defined by
\eq{
f(x) = \begin{cases}
x_{\ell(x)} & \text{if } \ell(x) \text{ odd} \\
\epsilon & \text{otherwise}
\end{cases}
}
Now if $\omega \in \B^\infty$ satisfies $\omega_{2n} = f(\omega_{<2n}) = \omega_{2n-1}$ for all $n \in \N$
then Theorem \ref{thm_positive} shows that $\lim_{n\to\infty} \Mn(\omega_{2n}|\omega_{<2n}) = 1$. It says nothing
about the predictive qualities of $\Mn$ on the odd bits, on which there are no restrictions.

The proof essentially relies on using $f$ to show that monotone programs for $\omega_{<n_i} \neg\omega_{n_i}$ can be converted to prefix programs.
This is then used to show that $\M(\omega_{<n_i} \neg\omega_{n_i}) \eqt \m(\omega_{<n_i}\neg\omega_{n_i})$. The result will then
follow from Theorem \ref{thm_gap}.

Theorem \ref{thm_positive} insists that $f$ be totally recursive and that $f(\omega_{<n}) = \epsilon$ if $f$ refrains
from predicting. One could instead allow $f$ to be partially recursive and simply not halt to avoid making a prediction. The proof below breaks
down in this case and we suspect that Theorem \ref{thm_positive} will become invalid if $f$ is permitted to be only partially recursive.

\begin{proof}[\proofof Theorem \ref{thm_positive}]
We construct a machine $L$ from $\Um$ consisting of all programs that produce output that $f$ would not predict. We then show that
these programs essentially form a prefix machine. Define $L$ by the following process
\begin{enumerate}
\item $L := \emptyset$ and $t := 1$.
\item Let $(p, y)$ be the $t$th pair in $\Um$.
\item Let $i$ be the smallest natural number such that $y_i \neq f(y_{<i}) \neq \epsilon$. That is, $i$ is the position at which $f$ makes its first
mistake when predicting $y$. If $f$ makes no prediction errors then $i$ doesn't exist.\footnote{This is where
the problem lies for partially recursive prediction functions. Computing the smallest $i$ for which $f$ predicts incorrectly is incomputable
if $f$ is only partially recursive, but computable if it is totally recursive. It is this distinction that allows $L$ to be recursively enumerable, and so
be a machine.}
\item If $i$ exists then $L := L \cup \left\{(p, y_{1:i}) \right\}$ (Note that we do not allow $L$ to contain duplicates).
\item $t := t+1$ and go to step 2.
\end{enumerate}
Since $f$ is totally recursive and $\Um$ is recursively enumerable, the process above shows that $L$ is recursively enumerable. It is easy
to see that $L$ is a monotone machine. Further, if $(p, y), (q, x) \in L$ with $p \prefixeq q$ then $y = x$. This follows since
by monotonicity we would have that $y \prefixeq x$, but $f(x_{<\ell(y)}) = f(y_{<\ell(y)}) \neq y_{\ell(y)} = x_{\ell(y)}$ and by steps 3 and 4 in
the process above we have that $\ell(x) = \ell(y)$.

Recall that $L_t$ is the $t$th enumeration of $L$ and contains $t$ elements. Define $\bar L_t \subseteq L_t$
to be the largest prefix free set of shortest programs. Formally, $(p, y) \in \bar L_t$ if there does not exist a $(q, x) \in L_t$
such that $q \prefix p$. For example, if $L_t = (1, 001), (11, 001), (01, 11110), (010, 11110)$ then $\bar L_t = (1, 001), (01, 11110)$. If
we now added $(0, 11110)$ to $L_t$ to construct $L_{t+1}$ then $\bar L_{t+1}$ would be $(1, 001), (0, 11110)$.

Since $L_t$ is finite, $\bar L_t$ is easily computable from $L_t$. Therefore the following function is computable.
\eq{
P(y, t) := \sum_{p : (p, y) \in \bar L_t} 2^{-\ell(p)} \geq 0.
}
Now $\bar L_t$ is prefix free, so by Kraft's inequality $\sum_{y\in\B^*} P(y, t) \leq 1$ for all $t \in \N$.
We now show that $P(y, t + 1) \geq P(y, t)$ for all $y \in \B^*$ and $t\in \N$ which proves that $P(y) = \lim_{t\to\infty} P(y, t)$ exists
and is a semi-distribution.

Let $(p, y)$ be the program/output pair in $L_{t+1}$ but not in $L_t$. To see how $P(\cdot, t)$ compares to $P(\cdot, t+1)$ we need to
compare $\bar L_{t}$ and $\bar L_{t+1}$. There are three cases:
\begin{enumerate}
\item There exists a $(q, x) \in L_t$ with $q \prefix p$. In this case $\bar L_{t+1} = \bar L_t$.
\item There does not exist a $(q, x) \in L_t$ such that $p \prefix q$. In this case $(p, y)$ is simply added to
$\bar L_t$ to get $\bar L_{t+1}$ and so $\bar L_t \subset \bar L_{t+1}$. Therefore $P(\cdot, t+1) \geq P(\cdot, t)$ is clear.
\item There does exist a $(q, x) \in \bar L_t$ such that $p \prefix q$. In this case $\bar L_{t+1}$ differs from $\bar L_t$ in that
it contains $(p, y)$ but not $(q, x)$. Since $p \prefix q$ we have that $y = x$. Therefore
$P(y, t+1) - P(y, t) = 2^{-\ell(p)} - 2^{-\ell(q)} > 0$ since $p \prefix q$. For other values, $P(\cdot, t) = P(\cdot, t+1)$.
\end{enumerate}
Note that it is not possible that $p = q$ since then $x = y$ and duplicates are not added to $L$.
Therefore $P$ is an enumerable semi-distribution. By Theorem \ref{thm_kolmogorov} we have
\eqn{
\label{eqn-pos1} \m(\omega_{<n_i}\neg\omega_{n_i}) \geqt P(\omega_{<n_i}\neg\omega_{n_i})
}
where the constant multiplicative fudge factor in the $\geqt$ is independent of $i$.
Suppose $\omega_{<n_i}\neg\omega_{n_i} \in \Um(p)$.
Therefore there exists a $y$ such that $\omega_{<n_i}\neg\omega_{n_i} \prefixeq y$ and $(p, y)\in\Um$. By parts 2 and 3 of the process above,
$(p, \omega_{<n_i}\neg\omega_{n_i})$ is added to $L$. Therefore there exists a $T \in \N$
such that $(p, \omega_{<n_i}\neg\omega_{n_i}) \in L_t$ for all $t \geq T$.

Since $\omega_{<n_i}\neg\omega_{n_i} \in \Um(p)$, there does not exist a $q \prefix p$ with $\omega_{<n_i}\neg\omega_{n_i} \in \Um(q)$. Therefore
eventually, $(p, \omega_{<n_i}\neg\omega_{n_i}) \in \bar L_t$ for all $t \geq T$. Since every program in $\Um$ for $\omega_{<n_i}\neg\omega_{n_i}$
is also a program in $L$, we get
\eq{
\lim_{t\to\infty} P(\omega_{<n_i}\neg\omega_{n_i}, t) \equiv P(\omega_{<n_i}\neg\omega_{n_i}) = \M(\omega_{<n_i} \neg\omega_{n_i}).
}
Next,
\eqn{
\label{eqn-pos2} \Mn(\neg\omega_{n_i} | \omega_{<n_i}) &\equiv {\M(\omega_{<n_i}\neg\omega_{n_i}) \over \M(\omega_{<n_i}\omega_{n_i}) + \M(\omega_{<n_i}\neg\omega_{n_i})} \\
\label{eqn-pos3} &\leqt {\m(\omega_{<n_i}\neg\omega_{n_i}) \over \M(\omega_{1:n_i})} \\
\label{eqn-pos4} &\eqt {\m(\omega_{1:n_i}) \over \M(\omega_{1:n_i})}
}
where Equation (\ref{eqn-pos2}) follows by the definition of $\Mn$. Equation (\ref{eqn-pos3}) follows from Equation (\ref{eqn-pos1}) and algebra. Equation (\ref{eqn-pos4}) follows since $\m(xb) \eqt 2^{-K(xb)} \eqt 2^{-K(x\neg b)} \eqt \m(x\neg b)$, which is Theorem \ref{thm_kolmogorov}.
However, by Theorem \ref{thm_gap}, $\lim_{i\to\infty} {\m(\omega_{<n_i}) \over \M(\omega_{<n_i})} = 0$ and so $\lim_{i\to\infty} \Mn(\neg\omega_{n_i} | \omega_{<n_i}) = 0$. Therefore $\lim_{i\to\infty} \Mn(\omega_{n_i} | \omega_{<n_i}) = 1$
as required.
\proofend\end{proof}
We have remarked already that Theorem \ref{thm_positive} is likely not valid if $f$ is permitted to be a partial recursive function that only
output on sequences for which they make a prediction. However, there is a class of predictors larger than the totally recursive ones of Theorem
\ref{thm_positive}, which $\Mn$ still learns.
\begin{theorem}\label{thm_positive2}
Let $f:\B^*\to \B \cup \left\{\epsilon\right\}$ be a partial recursive function and $\omega \in \B^\infty$ satisfying
\begin{enumerate}
\item $f(\omega_{<n})$ is defined for all $n$.
\item $f(\omega_{<n}) = \omega_n$ whenever $f(\omega_{<n}) \neq \epsilon$.
\end{enumerate}
If $f(\omega_{<n_i}) \in \B$ for an infinite sequence $n_1, n_2, n_3, \cdots$ then
\eq{
\lim_{i\to\infty} \Mn(\omega_{n_i} | \omega_{<n_i}) = 1.
}
\end{theorem}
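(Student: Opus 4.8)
The plan is to reuse the proof of Theorem \ref{thm_positive} almost verbatim; the only genuinely new issue is that the construction of the monotone machine $L$ there used totality of $f$ in step~3, where one searches for the smallest $i$ with $y_i \neq f(y_{<i}) \neq \epsilon$. That search evaluates $f$ on every prefix $y_{<j}$ with $j \le i$, so if $f$ is merely partial it can hang on some $j<i$ for which $f(y_{<j})$ never halts. First I would isolate this as the sole obstruction and note that everything downstream of the construction of $L$ (that $L$ is monotone, that $p \prefixeq q$ forces equal outputs, the semidistribution $P(\cdot,t)$ and its limit $P$, the bound $\m(\cdot)\geqt P(\cdot)$, and the chain (\ref{eqn-pos2})--(\ref{eqn-pos4})) depends only on the defining \emph{property} of membership in $L$, not on how membership is decided.

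Second, I would replace the sequential "first-error" loop by a dovetailed enumeration. Instead of computing $i$ outright, I enumerate $\Um$ and, for each pair $(p,y)$ it produces, run $f$ on the prefixes $y_{<1},y_{<2},\dots$ with unbounded time, adding $(p,y_{1:i})$ to $L$ only once I have \emph{confirmed} (observed halting of all the relevant computations) that $f(y_{<j}) \in \{y_j,\epsilon\}$ for every $j<i$ and that $f(y_{<i}) = \neg y_i$. This is a positive, semidecidable condition, so $L$ stays recursively enumerable; and since a pair lies in $L$ exactly when it records a confirmed first error of $f$, the membership property is identical to the total case, so the monotone-machine argument and the construction of $P$ go through unchanged.

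Third, and this is the crux, I would check that every $\Um$-program for $\omega_{<n_i}\neg\omega_{n_i}$ is still captured by $L$ despite the weaker, confirmation-based rule. Fix $p$ with $\omega_{<n_i}\neg\omega_{n_i} \prefixeq \Um(p)$. All prefixes the confirmation step must examine are $\omega_{<j}$ for $j \le n_i$, which are prefixes of $\omega$; by hypothesis~1 of the theorem $f$ halts on each of them, so the confirmation computation itself halts. By hypothesis~2, $f(\omega_{<j}) \in \{\omega_j,\epsilon\}$ for $j<n_i$, so no earlier error is recorded, while $f(\omega_{<n_i}) = \omega_{n_i}$ (using $f(\omega_{<n_i})\in\B$) together with $(\neg\omega_{n_i})$ being the $n_i$-th output bit exhibits the first error precisely at position $n_i$. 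Hence $(p,\omega_{<n_i}\neg\omega_{n_i})$ is eventually added to $L$, and as in Theorem \ref{thm_positive} this yields $P(\omega_{<n_i}\neg\omega_{n_i}) = \M(\omega_{<n_i}\neg\omega_{n_i})$.

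Finally the conclusion follows exactly as before: (\ref{eqn-pos4}) bounds $\Mn(\neg\omega_{n_i}\,|\,\omega_{<n_i})$ by a constant times $\m(\omega_{<n_i})/\M(\omega_{<n_i})$, which tends to $0$ by Theorem \ref{thm_gap}, whence $\Mn(\omega_{n_i}\,|\,\omega_{<n_i}) \to 1$. I expect the delicate point to be entirely the recursive-enumerability bookkeeping: convincing oneself that restricting to \emph{confirmed} first errors both (a) keeps $L$ recursively enumerable for partial $f$ and (b) still retains all programs for the critical strings $\omega_{<n_i}\neg\omega_{n_i}$. Point (b) is exactly where hypothesis~1 (that $f$ is defined on every prefix of $\omega$) is indispensable, and it is precisely this assumption that repairs the gap left open for fully partial prediction functions at the end of Theorem \ref{thm_positive}.
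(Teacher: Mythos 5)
Your proposal is correct and follows essentially the same route as the paper's proof: the paper likewise dovetails the enumeration of $\Um$ with unbounded-time evaluation of $f$ on prefixes, adding $(p,y_{1:i})$ to $L$ only once it is confirmed that $f$ halts on all $y_{<k}$, $k<i$, with value in $\{y_k,\epsilon\}$ and that $f(y_{<i})\neq y_i$, and then invokes hypothesis~1 to conclude that $L$ is unchanged on the critical strings $\omega_{<n_i}\neg\omega_{n_i}$. Your write-up is in fact a more detailed account of exactly the modification the paper sketches.
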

The difference between this result and Theorem \ref{thm_positive} is that $f$ need only be defined on all prefixes of at least one $\omega \in \B^\infty$
and not everywhere in $\B^*$. This allows for a slightly broader class of predictors. For example, let $\omega = p^1 b^1 p^2 b^2 p^3 b^3 \cdots$ where
$p^i$ is some prefix machine that outputs at least one bit and $b^i$ is the first bit of that output. Now there exists a computable $f$
such that $f(p^1 b^1 \cdots p^{i-1} b^{i-1} p^i) = b^i$ for all $i$ and $f(\omega_{<n}) = \epsilon$ whenever $\omega_n \neq b^i$ for some $i$ ($f$
only tries to predict the outputs). By Theorem \ref{thm_positive2}, $\Mn$ will correctly predict $b^i$.

The proof of Theorem \ref{thm_positive2} is almost identical to that of Theorem \ref{thm_positive}, but with one additional subtlety.\\
{\it Proof sketch. }
The proof follows that of Theorem \ref{thm_positive} until the construction of $L$. This breaks down because step 3
is no longer computable since $f$ may not halt on some string that is not a prefix of $\omega$. The modification is to run steps 2-4
in parallel for all $t$ and only adding $(p, y_{1:i})$ to $L$ once it has been proven that $f(y_{<i}) \neq y_i$ and $f(y_{<k})$ halts for all $k < i$,
and either chooses not to predict (outputs $\epsilon$), or predicts correctly. Since $f$ halts on all
prefixes of $\omega$, this does not change $L$ for any programs we care about and the remainder of the proof goes through identically. \proofend

It should be noted that this new class of predictors is still less general than allowing $f$ to an arbitrary partial recursive predictor. For
example, a partial recursive $f$ can predict
the ones of the halting sequence, while choosing not to predict the zeros (the non-halting programs). It is clear this cannot be modified into
a computable $f$ predicting both ones and zeros, or predicting ones and outputting $\epsilon$ rather than zero, as this would solve the halting problem.

\section{$\M$ Fails to Predict Selected Bits}

The following theorem is the corresponding negative result that while the conditional distribution of $\Mn$ converges to $1$ on
recursive sub-patterns, $\M$ can fail to do so.
\begin{theorem}\label{thm_negative}
Let $f:\B^*\to\B\cup\left\{\epsilon\right\}$ be the total recursive function defined by,
\eq{
f(z) := \begin{cases}
z_{\ell(z)} & \text{if } \ell(z) \text{ odd} \\
\epsilon & \text{otherwise}
\end{cases}
}
There exists an infinite string $\omega \in \B^\infty$ with $\omega_{2n} = f(\omega_{<2n}) \equiv \omega_{2n-1}$ for all $n \in \N$ such that
\eq{
\liminf_{n\to\infty} \M(\omega_{2n} | \omega_{<2n}) < 1.
}
\end{theorem}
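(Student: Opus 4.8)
My first step would be to rewrite the quantity to be bounded. Since $\M(\omega_{2n}\mid\omega_{<2n})=\M(\omega_{1:2n})/\M(\omega_{<2n})$, we have
$1-\M(\omega_{2n}\mid\omega_{<2n})=\big(\M(\omega_{<2n}\neg\omega_{2n})+d_n\big)/\M(\omega_{<2n})$,
where $d_n:=\M(\omega_{<2n})-\M(\omega_{<2n}0)-\M(\omega_{<2n}1)$ is the defect (the $\M$-probability of stopping at $\omega_{<2n}$). Applying the construction of Theorem \ref{thm_positive} to the present $f$ gives $\M(\omega_{<2n}\neg\omega_{2n})\leqt\m(\omega_{<2n})$, so by Theorem \ref{thm_gap} the wrong-bit term is $o(\M(\omega_{<2n}))$ and contributes nothing to the $\liminf$. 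Hence the whole problem reduces to forcing the \emph{defect} term to be a fixed fraction of the total mass infinitely often, i.e. to constructing $\omega$ (with $\omega_{2n}=\omega_{2n-1}$) and a fixed $\epsilon>0$ such that $\M(\omega_{1:2n})\le(1-\epsilon)\,\M(\omega_{1:2n-1})$ for infinitely many $n$. This is exactly the mass that normalisation discards, which explains at the technical level why $\Mn$ succeeds while $\M$ need not.

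\textbf{Adversarial construction by self-reference.} I would build $\omega$ and the failure simultaneously using a monotone machine $N$ defined through the recursion theorem, so that $N$ knows a constant $c>0$ with $\M\geqt c\,\M_N$. The machine $N$ emits $\omega$ one odd bit at a time (the even bits being forced copies), and at an infinite, sparse set of odd-ending prefixes $x=\omega_{1:2n-1}$ it injects \emph{halting} programs that output exactly $x$ and stop. Such programs raise $\M(x)$ and the defect by at least $c\,W$, where $W$ is the injected weight, but do \emph{not} raise the continuing mass $\M(x0)+\M(x1)$. Thus if the injection can be made a fixed multiple of $\M(x)$, the target inequality $\M(\omega_{1:2n})\le(1-\epsilon)\M(\omega_{1:2n-1})$ follows. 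Injections at successive attack positions are given geometrically decreasing weights so that their total stays within the Kraft budget of $N$'s invocation in $\Um$.

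\textbf{The main obstacle.} The crux is that $\M$ is only enumerable from below and is universal, so after an injection other programs may keep inflating the continuing mass $\M(x0)+\M(x1)$, diluting the defect back towards zero; moreover Theorem \ref{thm_gap} guarantees that generic monotone (continuing) mass is abundant, so a naive injection cannot dominate it. To control this I would interleave injection with approximation: enumerate $\Um$, maintain lower bounds on $\M(x)$ and on the continuing mass, and declare a stage an ``attack'' and commit the even bit only once the approximated continuing mass has stabilised below a threshold, calibrating $W$ to that threshold; the odd bits would be chosen to diagonalise against the currently shortest rival continuations, so that $N$'s own thread dominates the continuing mass at the attacked prefixes. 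Proving that this stabilisation-and-domination succeeds at infinitely many stages — i.e. that rival continuing mass cannot forever outrun the injected stopping mass — is the heart of the argument and the step I expect to require the genuinely non-standard reasoning the introduction promises; the rest is bookkeeping with the Kraft budget and the reduction above.
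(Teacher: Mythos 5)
Your opening reduction is correct, and it matches the paper's own diagnosis: by the machine construction from Theorem \ref{thm_positive} one gets $\M(\omega_{<2n}\neg\omega_{2n})\leqt\m(\omega_{<2n})$, which is $o(\M(\omega_{<2n}))$ by Theorem \ref{thm_gap}, so any failure of $\M$ on this pattern must come from the normalisation defect (the paper makes the same point in the remark following the theorem). The gap is everything after that. The step you defer --- ``proving that this stabilisation-and-domination succeeds at infinitely many stages'' --- is not bookkeeping; it is the entire theorem, and the strategy you sketch fails for two concrete reasons. First, ``commit the even bit only once the approximated continuing mass has stabilised below a threshold'' is not an operation a machine can perform: $\M$ is enumerable from below, so exceedance of a threshold is a detectable ($\Sigma_1$) event, but stabilisation below one is $\Pi_1$; if $N$ commits on finite evidence, later enumeration can inflate the continuing mass and dilute every attack, and if $N$ waits for genuine stabilisation it stalls and never determines $\omega$. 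Second, and fatally: if $N$ really emits all of $\omega$, then $\omega$ is output by a single monotone program, so $\M(\omega_{1:m})$ is bounded below by that program's fixed contribution $2^{-\ell}$ (times a universality constant) for every $m$, while the total stopping mass along $\omega$ is summable, $\sum_m \bigl[\M(\omega_{1:m})-\M(\omega_{1:m}0)-\M(\omega_{1:m}1)\bigr]\le\M(\epsilon)\le 1$. Hence the relative defect tends to $0$ (indeed $\M(\omega_m|\omega_{<m})\to 1$ for any such $\omega$), so no witness with a computably emitted backbone exists; equivalently, to keep the defect a fixed fraction $\epsilon$ of a mass bounded below you would need injection weights $W_k$ bounded away from zero infinitely often, contradicting your own Kraft budget. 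The witness sequence must be incomputable, and your construction contains no source of incomputability.

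The paper's proof supplies exactly the two ingredients you are missing. For a single large-defect string it uses exceedance-triggered diagonalisation (Li--Vit\'anyi exercise 4.5.6): given $n$, extend $z$ by $\neg b$ whenever the approximation shows $\M(zb,t)>2^{-n}$; each extension burns $2^{-n}$ of the current prefix's mass, which cannot go below zero, so the process stalls at a \emph{finite} $z$ with $\M(z0)+\M(z1)\le 2^{1-n}$, and $\M(z)\geqt 2^{-K(n)}$ comes for free because the diagonalising process is itself a monotone program of complexity $K(n)$ --- this plays the role of your injected stopping mass, with no recursion theorem needed. To get the failure infinitely often, the paper concatenates such blocks $\omega=z^1z^2z^3\cdots$ with thresholds $\delta_n$ defined in terms of $K(z^1\cdots z^{n-1})$ and $K(\ell(z^1\cdots z^{n-1}))$ --- this is precisely where $\omega$ becomes incomputable --- and uses Lemma \ref{lem_M} to transfer the per-block defect to the whole prefix. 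Finally, since this $\omega$ does not satisfy the even-equals-odd constraint, a separate doubling argument (two further machine constructions) shows the relative defect of $\omega_{1:n}$ transfers, up to a multiplicative constant, to $\bar\omega_{<2n}$ where $\bar\omega_{2n}=\bar\omega_{2n-1}=\omega_n$. Your plan addresses none of these: the budget argument that makes diagonalisation terminate, the incomputable glue between attacks, or the constraint (working directly on the constrained sequence, your diagonalisation can avoid only one of the two pair-continuations $00$/$11$, a complication the paper sidesteps entirely by doubling).
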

The proof requires some lemmas.
\begin{lemma}\label{lem_M}$\M(xy)$ can be bounded as follows.
\eqn{
\label{eqn-lemM} 2^{K(\ell(x))}\M(y) \geqt \M(xy) &\geqt \M(y)2^{-K(x)}.
}
\end{lemma}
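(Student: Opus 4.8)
The plan is to prove the two inequalities separately, each by constructing a fixed monotone machine that transforms programs and then invoking the universality of $\Um$ (Theorem \ref{thm_universal}, equivalently Theorem \ref{thm_kolmogorov} part 5). The guiding observation is an asymmetry between the two directions: building $xy$ out of a program for $y$ requires describing all of $x$, which costs $\eqt 2^{-K(x)}$, whereas recovering $y$ from a program for $xy$ only requires knowing how many leading bits to discard, which costs $\eqt 2^{-K(\ell(x))}$, since the actual bits of $x$ are already present in the output $xy$.

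For the lower bound $\M(xy)\geqt \M(y)2^{-K(x)}$, I would fix a shortest prefix program $q^*$ for $x$ on $\Up$, so that $\ell(q^*)=K(x)$, and define a monotone machine $L$ as follows: first simulate $\Up$ on a self-delimiting prefix $q$ of the input to decode a string $w=\Up(q)$, then treat the remaining input bits $p$ as a monotone program and output $w$ followed bit-by-bit by the monotone output stream of $\Um(p)$. Because $\Up$ is prefix-free, $L$ reads exactly $q$ before switching to $p$, and a short check shows $L$ inherits monotonicity from $\Um$. Now for every monotone program $p$ with $y\in\Um(p)$, the input $q^*p$ makes $L$ output $xy$, so $\M_L(xy)\ge \sum_{p:\,y\in\Um(p)}2^{-K(x)-\ell(p)}=2^{-K(x)}\M(y)$. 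Since $L$ is a fixed monotone machine, Theorem \ref{thm_universal} gives $\M(xy)\geqt \M_L(xy)\ge 2^{-K(x)}\M(y)$ with a multiplicative constant $c_L$ independent of $x$ and $y$.

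For the upper bound $\M(xy)\leqt 2^{K(\ell(x))}\M(y)$, equivalently $\M(y)\geqt 2^{-K(\ell(x))}\M(xy)$, I would build a monotone machine $L'$ that reads a self-delimiting code for $n:=\ell(x)$ (the shortest $\Up$-program for $\langle n\rangle$, of length $K(\ell(x))$), then treats the rest of the input as a monotone program $p$, simulates $\Um(p)$, and outputs its monotone stream with the first $n$ bits deleted, producing nothing until $\Um(p)$ has output more than $n$ bits. If $xy\in\Um(p)$ with $\ell(x)=n$, then $L'$ outputs $y$; deleting a fixed-length prefix preserves monotonicity, so $L'$ is a valid monotone machine. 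Summing over all $p$ with $xy\in\Um(p)$ gives $\M_{L'}(y)\ge 2^{-K(\ell(x))}\M(xy)$, and universality again yields $\M(y)\geqt 2^{-K(\ell(x))}\M(xy)$, which is the claimed bound.

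The main obstacle is not the counting, which is routine once the machines are in place, but verifying that both constructions are genuinely monotone in the sense of the Definition, i.e. that they respect $p\prefixeq q\implies y\prefixeq x$. The prefix-freeness of $\Up$ is exactly what guarantees that the decoding boundary between the $\Up$-stage and the $\Um$-stage is well defined and shared by comparable inputs, both when prepending $x$ (lower bound) and when reading the length code before discarding a length-$n$ prefix of a monotonically produced output (upper bound). This is the point needing the most care; everything else reduces to a Kraft-style summation and a single application of the universality constant $c_L$, which is independent of $x$ and $y$ because the transforming machine is fixed once and for all.
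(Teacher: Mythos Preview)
Your proposal is correct and follows essentially the same approach as the paper: for each inequality you build the same monotone machine (prefix-decode a string and prepend it to the $\Um$-stream for the lower bound; prefix-decode a length and strip that many leading bits from the $\Um$-stream for the upper bound) and then invoke universality. The only cosmetic difference is that the paper sums over \emph{all} $\Up$-programs for $x$ (resp.\ $\ell(x)$), obtaining $\m(x)$ (resp.\ $\m(\ell(x))$) and then applying $\m\eqt 2^{-K}$, whereas you feed in only the shortest program and get $2^{-K}$ directly.
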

\begin{proof}
Both inequalities are proven relatively easily by normal methods as used in \cite{LV08} and elsewhere. Nevertheless we present them as a warm-up
to the slightly more subtle proof later.

Now construct monotone machine $L$, which we should think of as taking two programs as input. The first, a prefix program $p$, the output of
which we view as a natural number $n$. The second, a monotone program. We then simulate the monotone machine and strip the first $n$ bits of
its output. $L$ is formally defined as follows.
\begin{enumerate}
\item $L := \emptyset$, $t := 1$
\item Let $(p, n), (q, y)$ be the $t$th pair of program/outputs in $\Up\times \Um$, which is enumerable.
\item If $\ell(y) \geq n$ then add $(pq, y_{n+1:\ell(y)})$ to $L$
\item $t := t+1$ and go to step 2
\end{enumerate}
By construction, $L$ is enumerable and is a monotone machine. Note that if $xy \in \Um(q)$ and $\ell(x) \in \Up(p)$ then $y \in L(pq)$.
Now,
\eqn{
\label{eqn-lemM-1} \M(y) \geqt \M_L(y) &\equiv \sum_{r : y \in L(r)} 2^{-\ell(r)} \geq \sum_{q, p : xy \in \Um(q), \ell(x) \in \Up(p)} 2^{-\ell(pq)} \\
\label{eqn-lemM-2} &=\sum_{q : xy \in \Um(q)} 2^{-\ell(q)} \sum_{p : \ell(x) \in \Up(p)} 2^{-\ell(p)} \equiv \M(xy) \m(\ell(x)) \\
\label{eqn-lemM-3} &\eqt \M(xy) 2^{-K(\ell(x))}
}
where Equation (\ref{eqn-lemM-1}) follows by Theorem \ref{thm_universal}, definitions and because if $xy \in \Um(q)$ and $\ell(x) \in \Up(p)$ then $y \in L(pq)$.
Equation (\ref{eqn-lemM-2}) by algebra, definitions. Equation (\ref{eqn-lemM-3}) by Theorem \ref{thm_kolmogorov}.

The second inequality is proved similarly. We define a machine $L$ as follows,
\begin{enumerate}
\item $L = \emptyset$, $t:=1$
\item Let $(q, x), (r, y)$ be the $t$th element in $\Up\times \Um$, which is enumerable.
\item Add $(qr, xy)$ to $L$
\item $t:=t+1$ and go to step 2
\end{enumerate}
It is easy to show that $L$ is monotone by using the properties of $\Up$ and $\Um$. Now,
\eq{
\M(xy) \geqt \M_L(xy) &\equiv \sum_{p : xy \in L(p)} 2^{-\ell(p)} \geq \sum_{q, r : x \in \Up(q), y \in \Um(r)} 2^{-\ell(qr)}  \\
&= \sum_{q : x \in \Up(q)} 2^{-\ell(q)} \sum_{r : y \in \Um(r)} 2^{-\ell(r)} \equiv \m(x) \M(y) \eqt 2^{-K(x)} \M(y).
}
\proofend\end{proof}
\begin{lemma}\label{lem_gap}
There exists an $\omega \in \B^\infty$ such that
\eq{
\liminf_{n\to\infty} \left[\M(0|\omega_{<n}) + \M(1|\omega_{<n}) \right] = 0.
}
\end{lemma}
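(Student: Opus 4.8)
The plan is to track the \emph{halting mass} $h(x) := \M(x) - \M(x0) - \M(x1) \ge 0$, the weight of monotone programs whose total output is exactly $x$. Since $\M(0|x) + \M(1|x) = 1 - h(x)/\M(x)$, the lemma is equivalent to building $\omega$ with $h(\omega_{<n})/\M(\omega_{<n}) \to 1$ along a subsequence, i.e.\ infinitely many prefixes at which almost all of the mass passing through actually stops there. The structural fact I would lean on is that $h$ is \emph{not} enumerable: deciding that a monotone program has emitted its final bit is undecidable, so $h(x)$ can be vastly larger than its enumerable lower bound $\m(x) \eqt 2^{-K(x)}$, even though Theorem~\ref{thm_gap} forces $\m(\omega_{<n})/\M(\omega_{<n}) \to 0$ on \emph{every} path. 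It is exactly this gap that must be exploited; no product estimate along a fixed path suffices, since $\prod_n[\M(0|\omega_{<n})+\M(1|\omega_{<n})]\to 0$ does not entail the required liminf.

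I would construct $\omega$ in stages together with an auxiliary monotone machine $N$, fixing via the recursion theorem a constant $c_N>0$ with $\M \geqt c_N\M_N$. Maintain a prefix $x_k$ (with $x_0=\epsilon$) and a reserve $r_k$ of $N$-mass reaching $x_k$, preserving the invariant that $N$ dominates there, $\M(x_k)\eqt r_k$. The key move at stage $k$ uses subadditivity of the semimeasure $\M^{(-N)}$ obtained by discarding $N$'s own programs: $\sum_{s\in\B^{\ell_k}}\M^{(-N)}(x_ks)\le\M(x_k)$, so at least half the blocks $s$ of length $\ell_k$ satisfy $\M^{(-N)}(x_ks)\le 2\,\M(x_k)2^{-\ell_k}$. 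While enumerating $\M$, the machine $N$ searches for and commits to such a continuation-resistant block $s_k$, setting $y_k:=x_ks_k$. Because $N$ computes $s_k$ itself, it can redirect almost all of the reserve into many mutually incomparable programs that emit $y_k$ and halt, at only $O(1)$ extra program bits; these become $\Um$-programs halting at $y_k$, so a halting mass $w_k\geqt r_k$ is poured into $h(y_k)$ with $\ell_k$ never entering the cost. A small residual fraction $\delta_k r_k$ of reserve is instead allowed to emit one further bit $b_k$ and continue, furnishing the next reserve at $x_{k+1}:=y_kb_k$.

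Assembling the bounds at $y_k$ then gives the conclusion. The continuation splits as $\M(y_k0)+\M(y_k1)\le\M^{(-N)}(y_k)+\delta_k r_k\le 2\,\M(x_k)2^{-\ell_k}+\delta_k r_k$, whereas $\M(y_k)\ge h(y_k)\geqt w_k\geqt r_k\eqt\M(x_k)$. Choosing $\ell_k$ large enough that $2^{-\ell_k}\le\delta_k c_N$ and letting $\delta_k\to 0$ yields $\M(0|y_k)+\M(1|y_k)\leqt\delta_k\to 0$. Since the non-$N$ mass reaching $x_{k+1}=y_kb_k$ is at most $\M^{(-N)}(y_k)$, negligible beside the continued reserve, the invariant $\M(x_{k+1})\eqt r_{k+1}\approx\delta_k r_k$ survives and the construction never stalls ($r_k>0$ for all $k$). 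Taking $\omega:=\lim_k x_k$ and $n_k:=\ell(y_k)+1$ gives $\liminf_{n\to\infty}[\M(0|\omega_{<n})+\M(1|\omega_{<n})]=0$.

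The main obstacle is making the pour and the continuation bound hold simultaneously against the never-halting enumeration of $\M$: a block cheap enough for $N$ to emit looks equally cheap for outside programs to extend. This apparent tension is resolved precisely by the undetectability of halting --- no enumerable competitor can aim continuation mass at the node $N$ has just stopped at --- together with the averaging bound; but turning this into a rigorous argument forces one to commit to $s_k$ on finite evidence while the inequality $\M^{(-N)}(y_k)\le 2\,\M(x_k)2^{-\ell_k}$ must persist in the limit. I would handle this with a movable-marker bookkeeping that shifts the reserve off any candidate whose observed non-$N$ continuation crosses the threshold, and the delicate part is verifying that infinitely many stages succeed while the domination invariant $\M(x_k)\eqt r_k$ is maintained throughout.
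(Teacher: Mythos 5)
Your reduction to halting mass is fine ($\M(0|x)+\M(1|x)=1-h(x)/\M(x)$), and the averaging step over blocks of length $\ell_k$ is sound, but the central mechanism --- pouring irrevocable halting mass into a candidate and defending it with a movable marker --- fails, and not merely for the ``delicate'' bookkeeping reasons you flag; the budget war you set up has a provably losing exchange rate. To force the ratio at $y_k$ below $\epsilon_k$, your machine $N$ must at some finite time have committed halting mass $w_k$ to $y_k$, with your dislodgement threshold satisfying $\theta_k \leqt \epsilon_k 2^{-\ell(i')} w_k$, where $2^{-\ell(i')}$ is the coding discount $N$'s mass receives inside $\Um$. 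Such commitments are irrevocable: a monotone program whose output is $x_k s_k$ can never be re-aimed at an incomparable block $x_k s'$, so poured mass at a dislodged candidate is simply lost (only unspent reserve can ``move''). Now the enumeration defining $\Um$ contains a machine $N'$ that simulates $N$ (hard-coding $N$'s index), waits until $N$ has poured $w_k$ into the current candidate $y$, and only then spends just over $\theta_k$ of non-$N$ mass on $y0,y1$, triggering your own marker rule. Each strike costs $N'$ about $\theta_k$ but destroys $w_k \geqt \theta_k/\epsilon_k$ of $N$'s reserve. Since $N'$ can afford roughly $2^{-\ell(j')}/\theta_k$ strikes while $N$'s total reserve is at most $1$, stage $k$ survives only if $r_k \geqt 2^{-\ell(j')} w_k/\theta_k \geqt 1/\epsilon_k$, which is impossible once $\epsilon_k$ is small; and the adversary only needs to kill one stage. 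It can even strike retroactively, after stages $k+1,k+2,\dots$ have committed mass extending $y_k$, wiping those out as well; relatedly, your invariant $\M(x_k)\eqt r_k$ cannot be maintained with the improving constants your recursion needs, because a copycat machine guarantees $\M^{(-N)}(x_k)\geqt r_k$ forever.

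The repair is to make the defending mass descriptional rather than poured, which is exactly what the paper does. Fix $n$ and run a marker process with threshold $2^{-n}$, but instead of pouring, let a single monotone program --- whose input is a shortest prefix program for $n$, so its cost is $K(n)$ bits once and for all --- output the current marker position: whenever the enumeration shows $\M(zb,t)>2^{-n}$ for some $b$, the process extends $z$ to $z\neg b$, i.e.\ away from the observed mass. Marker moves now cost the defender nothing (its program just emits more output), while each move drains $2^{-n}$ from the adversary's total budget of at most $1$, so the process stops at a finite $z$ with $\M(z0)+\M(z1)\le 2^{1-n}$; the defender's program outputs exactly this final $z$ and never extends, giving $\M(z)\geqt 2^{-K(n)}$ and hence $\M(0|z)+\M(1|z)\leqt 2^{1-n+K(n)}\to 0$. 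This is precisely the asymmetry your scheme lacks: the defending mass follows the marker for free and only ``becomes halting mass'' in the limit. Finally, your chaining of stages through the reserve invariant is also unnecessary: the paper builds $\omega=z^1z^2z^3\cdots$ from successively chosen blocks and glues them with Lemma~\ref{lem_M}, which gives $\M(b\,|\,z^1\cdots z^n) \leqt 2^{K(\ell(z^1\cdots z^{n-1}))+K(z^1\cdots z^{n-1})}\,\M(b\,|\,z^n)$, and then chooses the parameter of block $z^n$ small enough to swallow that factor.
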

\begin{proof}
First we show that for each $\delta > 0$ there exists a $z \in \B^*$ such that $\M(0|z) + \M(1|z) < \delta$. This result is already known
and is left as an exercise (4.5.6) with a proof sketch in \cite{LV08}. For completeness, we include a proof. Recall that $\M(\cdot, t)$ is
the function approximating $\M(\cdot)$ from below. Fixing an $n$, define $z \in \B^*$ inductively as follows.
\begin{enumerate}
\item $z := \epsilon$
\item Let $t$ be the first natural number such that $\M(zb, t) > 2^{-n}$ for some $b \in \B$.
\item If $t$ exists then $z := z\neg b$ and repeat step 2. If $t$ does not exist then $z$ is left unchanged (forever).
\end{enumerate}
Note that $z$ must be finite since each time it is extended, $\M(zb, t) > 2^{-n}$. Therefore $\M(z\neg b, t) < \M(z, t) - 2^{-n}$ and so
each time $z$ is extended, the value of $\M(z, t)$ decreases by at least $2^{-n}$ so eventually $\M(zb, t) < 2^{-n}$ for all $b \in \B$. Now once the $z$
is no longer being extended ($t$ does {\it not} exist in step 3 above) we have
\eqn{
\label{eqn-lem1} \M(z0) + \M(z1) &\leq 2^{1 - n}.
}
However we can also show that $\M(z) \geqt 2^{-K(n)}$. The intuitive idea is that the process above requires only the value of $n$, which can be
encoded in $K(n)$ bits. More formally, let $p$ be such that $n \in \Up(p)$ and note that the following set is
recursively enumerable (but not recursive) by the process above.
\eq{
L_{p} := (p, \epsilon), (p, z_{1:1}), (p, z_{1:2}), (p, z_{1:3}), \cdots, (p, z_{1:\ell(z) - 1}), (p, z_{1:\ell(z)}).
}
Now take the union of all such sets, which is a) recursively enumerable since $\Up$ is, and b) a monotone machine because $\Up$ is a prefix machine.
\eq{
L := \bigcup_{(p, n) \in \Up} L_{p}.
}
Therefore
\eqn{
\label{eqn-lem2} \M(z) \geqt \M_L(z) \geq 2^{-K(n)}
}
where the first inequality is from Theorem \ref{thm_universal} and the second follows since if $n^*$ is the program of length $K(n)$
with $\Up(n^*) = n$ then $(n^*, z_{1:\ell(z)}) \in L$. Combining Equations (\ref{eqn-lem1}) and (\ref{eqn-lem2}) gives
\eq{
\M(0|z) + \M(1|z) &\leqt 2^{1 - n + K(n)}.
}
Since this tends to zero as $n$ goes to infinity,\footnote{An integer $n$ can easily be encoded in $2 \log n$ bits, so $K(n) \leq 2 \log n + c$ for some $c > 0$ independent of $n$.} for each $\delta > 0$ we can construct a $z\in\B^*$ satisfying $\M(0|z) + \M(1|z) < \delta$, as required.
For the second part of the proof, we construct $\omega$ by concatenation.
\eq{\omega := z^1 z^2z^3 \cdots
}
where $z^n \in \B^*$ is chosen such that,
\eqn{
\label{eqn-lem0} \M(0|z^n) + \M(1|z^n) < \delta_n
}
with $\delta_n$ to be chosen later. Now,
\eqn{
\label{eqn-lem3} \M(b|z^1\cdots z^n) &\equiv {\M(z^1\cdots z^n b) \over \M(z^1\cdots z^n) } \\
\label{eqn-lem4} &\leqt \left[{2^{K(\ell(z^1\cdots z^{n-1})) + K(z^1\cdots z^{n-1})} }\right]{\M(z^n b) \over \M(z^n)} \\
\label{eqn-lem5} &\equiv \left[2^{K(\ell(z^1\cdots z^{n-1})) + K(z^1\cdots z^{n-1})}  \right]  \M(b|z^n)
}
where Equation (\ref{eqn-lem3}) is the definition of conditional probability. Equation (\ref{eqn-lem4}) follows by
applying Lemma \ref{lem_M} with $x = z^1z^2\cdots z^{n-1}$ and $y = z^n$ or $z^n b$.
Equation (\ref{eqn-lem5}) is again the definition of conditional probability. Now let
\eq{
\delta_n = {2^{-n} \over 2^{K(\ell(z^1 \cdots z^{n-1})) + K(z^1\cdots z^{n-1})}}.
}
Combining this with Equations (\ref{eqn-lem0}) and (\ref{eqn-lem5}) gives
\eq{
\M(0 | z^1 \cdots z^n) + \M(1|z^1 \cdots z^n) \leqt 2^{-n}.
}
Therefore,
\eq{
\liminf_{n\to\infty} \left[\M(0|\omega_{<n}) + \M(1|\omega_{<n})\right] = 0
}
as required.
\proofend\end{proof}
\begin{proof}[\proofof Theorem \ref{thm_negative}]
Let $\bar\omega\in\B^\infty$ be defined by $\bar \omega_{2n} := \bar\omega_{2n-1} := \omega_n$ where $\omega$ is the string defined in the previous lemma.
Recall $\Um := \left\{(p^1, y^1), (p^2, y^2), \cdots\right\}$ is the universal monotone machine. Define monotone machine $L$ by the
following process,
\begin{enumerate}
\item $L = \emptyset$, $t = 1$
\item Let $(p, y)$ be the $t$th element in the enumeration of $\Um$
\item Add $(p, y_1 y_3 y_5 y_7 \cdots)$ to $L$
\item $t := t+ 1$ and go to step 2.
\end{enumerate}
Therefore if $\bar\omega_{<2n} \in \Um(p)$ then $\omega_{1:n} \in L(p)$. By identical reasoning as elsewhere,
\eqn{
\label{eqn-neg1} \M(\omega_{1:n}) \geqt \M(\bar\omega_{<2n}).
}
In fact, $\M(\omega_{1:n}) \eqt \M(\bar\omega_{<2n})$, but this is unnecessary.
Let $P := \left\{p : \exists b \in \B \text{ s.t } \omega_{1:n}b \in \Um(p)\right\}$ and $Q := \left\{p : \omega_{1:n} \in \Um(p)\right\} \supset P$.
Therefore
\eq{
1 - \M(0 | \omega_{1:n}) - \M(1 | \omega_{1:n}) &= 1 - {{\sum_{p\in P} 2^{-\ell(p)}} \over {\sum_{q \in Q} 2^{-\ell(q)}}} \\
&= {{\sum_{p \in Q - P} 2^{-\ell(p)}} \over {\sum_{q \in Q} 2^{-\ell(q)}}}.
}
Now let $\bar P := \left\{p : \exists b \in \B \text{ s.t } \bar\omega_{<2n}b\in \Um(p)\right\}$ and
$\bar Q := \left\{p : \bar\omega_{<2n} \in \Um(p)\right\}
\supset \bar P$.
Define monotone machine $L$ by the following process
\begin{enumerate}
\item $L = \emptyset$, $t := 1$
\item Let $(p, y)$ be the $t$th program/output pair in $\Um$
\item Add $(p, y_1y_1 y_2 y_2 \cdots y_{\ell(y)-1} y_{\ell(y)-1} y_{\ell(y)})$ to $L$
\item $t := t + 1$ and go to step 2.
\end{enumerate}
Let $p \in Q - P$. Therefore $\omega_{1:n} \in \Um(p)$ and $\omega_{1:n}b\notin \Um(p)$ for any $b \in \B$. Therefore $\bar\omega_{<2n} \in L(p)$
while $\bar\omega_{<2n}b \notin L(p)$ for any $b \in \B$. Now there exists an $i$ such that $L$ is the $i$th machine in the enumeration of
monotone machines, $L^i$.

Therefore, by the definition of the universal monotone machine $\Um$ we have that $\bar\omega_{<2n}b \notin \Um(i'p) = L^i(p)  = L(p) \ni \bar\omega_{<2n}$ and $\Um(i'p) = L(p)$
for any $b \in \B$. Therefore $i'p \in \bar Q - \bar P$ and so,
\eqn{
\label{eqn-neg1.5} \sum_{q \in \bar Q - \bar P} 2^{-\ell(q)} \geq
\sum_{p : i'p \in \bar Q - \bar P} 2^{-\ell(i'p)} \geq
\sum_{p \in Q - P} 2^{-\ell(i'p)} \eqt \sum_{p \in Q - P} 2^{-\ell(p)}.
}
Therefore
\eqn{
\label{eqn-neg2} 1 - \M(0|\bar\omega_{<2n}) - \M(1 | \bar \omega_{<2n}) &\equiv {\sum_{p \in \bar Q - \bar P} 2^{-\ell(p)} \over \M(\bar\omega_{<2n})} \\
\label{eqn-neg3} &\geqt {\sum_{p \in Q- P} 2^{-\ell(p)} \over \M(\omega_{1:n})} \\
\label{eqn-neg4} &\equiv 1 - \M(0|\omega_{1:n}) - \M(1|\omega_{1:n})
}
where Equation (\ref{eqn-neg2}) follows from the definition of $\bar P$, $\bar Q$ and $\M$. Equation (\ref{eqn-neg3}) by (\ref{eqn-neg1.5}) and
(\ref{eqn-neg1}). Equation (\ref{eqn-neg4}) by the definition of $P, Q$ and $\M$.
Therefore by Lemma \ref{lem_gap} we have
\eq{
\limsup_{n\to\infty} \left[ 1 - \M(0|\bar\omega_{<2n}) - \M(1|\bar\omega_{<2n})\right] \geqt
\limsup_{n\to\infty} \left[ 1 - \M(0|\omega_{1:n}) - \M(1|\omega_{1:n})\right] = 1.
}
Therefore $\liminf_{n\to\infty} \M(\bar\omega_{2n}|\bar\omega_{<2n}) < 1$
as required.
\proofend\end{proof}
Note that $\lim_{n\to\infty} \M(\bar\omega_{2n}|\bar\omega_{<2n}) \neq 0$ in fact, one can show that there exists a $c > 0$ such that
$\M(\bar\omega_{2n}|\bar\omega_{<2n}) > c$ for all $n \in \N$. In this sense $\M$ can still be used to predict in the same way as $\Mn$,
but it will never converge as in Equation (\ref{eqn-sol}).

\section{Discussion}

\subsubsect{Summary}
Theorem \ref{thm_positive} shows that if an infinite sequence contains a computable sub-pattern then the normalised universal semi-measure $\Mn$
will eventually predict it. This means that Solomonoff's normalised version of induction is effective in the classification example given
in the introduction. Note that we have only proven the binary case, but expect the proof will go through identically for arbitrary finite
alphabet.

On the other hand, Theorem \ref{thm_negative} shows that plain $\M$ can fail to predict such structure in the sense that the conditional distribution
need not converge to $1$ on the true sequence. This is because it is not a proper measure, and does not converge to one.
These results are surprising since (all?) other predictive results, including Equation (\ref{eqn-sol}) and many others in \cite{Hut04,Hut07,LV08,Sol64a},
do not rely on normalisation.

\subsubsect{Consequences}
We have shown that $\Mn$ can predict recursive structure in infinite strings that are incomputable (even stochastically so).
These results give hope that a Solomonoff inspired algorithm may be effective at online classification, even when the training data is
given in a completely unstructured way. Note that while $\M$ is enumerable and $\Mn$ is only approximable,\footnote{A function $f$ is
approximable if there exists a computable function $f(\cdot, t)$ with $\lim_{t\to\infty} f(\cdot, t) = f(\cdot)$. Convergence need not be monotonic.}
both the conditional distributions are only approximable, which means it is no harder to predict using $\Mn$ than $\M$.

\subsubsect{Open Questions}
A number of open questions were encountered in writing this paper.
\begin{enumerate}
\item Extend Theorem \ref{thm_positive} to the stochastic case where a sub-pattern is generated
stochastically from a computable distribution rather than merely a computable function. It seems likely that a different approach
will be required to solve this problem.
\item Another interesting question is to strengthen the result by proving a convergence rate. It may be possible to prove that under
the same conditions as Theorem \ref{thm_positive} that $\sum_{i=1}^\infty \left[1 - \Mn(\omega_{n_i}|\omega_{<n_i})\right] \leqt K(f)$ where
$K(f)$ is the (prefix) complexity of the predicting function $f$. Again, if this is even possible, it will likely require a different approach.
\item Prove or disprove the validity of Theorem \ref{thm_positive} when the totally recursive prediction function $f$ (or the modified predictor of Theorem \ref{thm_positive2}) is replaced by
a partially recursive function.
\end{enumerate}

\subsubsect{Acknowledgements}
We thank Wen Shao and reviewers for valuable feedback on earlier drafts and the Australian Research Council for support under grant DP0988049.

\newpage

\begin{small}

\end{small}

\appendix
\section{Table of Notation}

\begin{tabular}{|p{1.6cm} | p{12.5cm}|}
\hline
{\bf Symbol} & {\bf Description} \\
$\B$ & Binary symbols, 0 and 1 \\
$\Q$ & Rational numbers \\
$\N$ & Natural numbers \\
$\B^*$ & The set of all finite binary strings \\
$\B^\infty$ & The set of all infinite binary strings \\
$x, y, z$ & Finite binary strings \\
$\omega$ & An infinite binary string \\
$\bar\omega$ & An infinite binary string with even bits equal to preceding odd bits \\
$\ell(x)$ & The length of binary string $x$ \\
$\neg b$ & The negation of binary symbol $b$. $\neg b = 0$ if $b = 1$ and $\neg b = 1$ if $b = 0$ \\
$p, q$ & Programs \\
$\mu$ & An enumerable semi-measure \\
$\M$ & The universal enumerable semi-measure \\
$\Mn$ & The normalised version of the universal enumerable semi-measure \\
$\m$ & The universal enumerable semi-distribution \\
$K(f)$ & The prefix Kolmogorov complexity of a function $f$ \\
$L$ & An enumeration of program/output pairs defining a machine \\
$\Um$ & The universal monotone machine \\
$\Up$ & The universal prefix machine \\
$\geqt$ & $f(x) \geqt g(x)$ if there exists a $c > 0$ such that  $f(x) > c\cdot g(x)$ for all $x$ \\
$\leqt$ & $f(x) \leqt g(x)$ if there exists a $c > 0$ such that  $f(x) < c\cdot g(x)$ for all $x$ \\
$\eqt$ & $f(x) \eqt g(x)$ if $f(x) \geqt g(x)$ and $f(x) \leqt g(x)$ \\
$x \prefix y$ & $x$ is a prefix of $y$ and $\ell(x) < \ell(y)$ \\
$x \prefixeq y$ & $x$ is a prefix of $y$ \\
\hline
\end{tabular}

\end{document}